\documentclass [
	letterpaper,
	10pt,
	conference
] {ieeeconf}
\IEEEoverridecommandlockouts                              
\usepackage{silence}
\usepackage{pifont}
\usepackage[utf8]{inputenc} 
\usepackage[T1]{fontenc}    

\usepackage{amsmath}
\usepackage{amssymb}
\usepackage{amsfonts}
\usepackage{balance}

\usepackage{xurl}
\usepackage{dblfloatfix}

\usepackage{booktabs}
\usepackage{nicefrac}
\usepackage{tikz}
\usetikzlibrary{arrows.meta,positioning,calc}
\definecolor{knobR}{HTML}{0072B2}     
\definecolor{knobVox}{HTML}{CC79A7}   
\definecolor{knobCtl}{HTML}{8A8A8A}   
\usepackage{todonotes}
\usepackage{siunitx}

\usepackage{adjustbox}
\usepackage{multirow}

\usepackage{algorithm}
\usepackage[spaceRequire=true]{algpseudocodex}
\usepackage{xpatch}
\makeatletter
\xpatchcmd{\algorithmic}{\itemsep\z@}{\itemsep=1pt}{}{}
\makeatother

\usepackage{graphicx}       
\graphicspath{{imgs/}}

\makeatletter
\def\endfigure{\end@float}
\def\endtable{\end@float}
\makeatother

\usepackage{amsmath, amsfonts, amssymb, amsthm, bm, mathtools}

\theoremstyle{plain}
\newtheorem{thm}{Theorem}[section]

\newtheorem{proposition}[thm]{Proposition}

\theoremstyle{definition}

\newtheorem*{ex}{Example}

\theoremstyle{remark}

\DeclareMathOperator{\tr}{tr}
\DeclareMathOperator{\Cov}{Cov}
\DeclareMathOperator{\med}{med}

\def\R{{\mathbb{R}}}  

\makeatletter\let\NAT@parse\undefined\makeatother

\usepackage[hidelinks, breaklinks=true]{hyperref}       
\usepackage[caption=false,font=footnotesize]{subfig}
\usepackage{cleveref}
\crefname{figure}{Fig.}{Figs.}
\Crefname{figure}{Fig.}{Figs.}
\crefname{table}{Table}{Tables}
\Crefname{table}{Table}{Tables}
\crefname{section}{Sec.}{Secs.}
\crefname{thm}{Prop.}{propositions}
\Crefname{thm}{Proposition}{Propositions}
\crefname{section}{Sect.}{sections}
\Crefname{section}{Section}{Sections}
\crefformat{equation}{(#2#1#3)}

\usepackage{microtype}

\newlength{\topfloatslack}
\newcommand{\topfloatpad}{\vspace*{\topfloatslack}}
\newcommand{\topfloatunpad}{\vspace{-\topfloatslack}}

\newcommand{\creditarch}{%
	\begin{tikzpicture}[
			font=\footnotesize,
			>={Stealth[length=4.5pt,width=3.2pt]},
			blk/.style={draw=black!55, rounded corners=1.5pt, align=center,
					inner xsep=3pt, inner ysep=3pt, minimum height=7mm, minimum width=17mm},
			io/.style={align=center, inner xsep=1pt},
			flow/.style={->, draw=black!70, line width=0.7pt},
			knobar/.style={->, line width=0.7pt},
			tag/.style={font=\scriptsize, inner sep=1pt},
			lbl/.style={font=\scriptsize, inner sep=1.5pt},
		]
		\node[io] (imu) at (0.3,1.35) {IMU};
		\node[blk] (prop) at (2.1,1.35) {propagate};
		\node[io] (scan) at (0.3,0) {scan};
		\node[blk] (down) at (2.1,0) {voxel\\downsample};
		\node[blk] (assoc) at (5.0,0) {plane match\\$n_i$};
		\node[blk, minimum height=15mm, minimum width=16mm] (upd) at (7.9,0.675)
		{Kalman\\update};
		\node[io] (out) at (10.15,0.675) {$\Sigma_{tt}^+$\\reported};

		\draw[flow] (imu) -- (prop);
		\draw[flow] (prop) -- node[lbl, above] {$\Sigma_{tt}^-$} (upd.west|-prop);
		\draw[flow] (scan) -- (down);
		\draw[flow] (down) -- node[lbl, above] {{\color{knobVox}$N$} points} (assoc);
		\draw[flow] (assoc) -- (upd.west|-assoc);
		\draw[flow] (upd) -- (out);

		\node[tag, text=knobVox, below=3.5mm of down] (kvox) {\textbf{voxel size}};
		\draw[knobar, draw=knobVox] (kvox) -- (down);
		\node[tag, text=knobR, above=3.5mm of upd] (kr) {\textbf{$R$}};
		\draw[knobar, draw=knobR] (kr) -- (upd);

		\node[tag, text=knobCtl, above=3.5mm of prop] (kq) {\textbf{$Q$}};
		\draw[knobar, draw=knobCtl] (kq) -- (prop);
		\coordinate (lb) at ($(assoc.south)+(0,-0.75)$);
		\draw[draw=knobCtl, line width=0.7pt] (upd.south) |- (lb);
		\draw[knobar, draw=knobCtl] (lb) -- (assoc.south);
		\node[tag, text=knobCtl, fill=white] at ($(lb)!0.5!(lb-|upd)$)
		{\textbf{iterations}};

		\node[tag, anchor=west] (eqh) at (11.85,1.62) {\textbf{Kalman update}};
		\node[lbl, anchor=north west] (eqk) at ($(eqh.south west)+(0,-0.10)$)
		{$K_t=\Sigma_{tt}^{-}H_t^\top(H_t\Sigma_{tt}^{-}H_t^\top
				+{\color{knobR}R})^{-1}$};
		\node[anchor=north west, inner sep=0pt] (eq) at ($(eqk.south west)+(0,-0.12)$)
		{$\Sigma_{tt}^{+}=(I-K_tH_t)\,\Sigma_{tt}^{-}$};
		\node[anchor=north west, inner sep=0pt] (eqi) at ($(eq.south west)+(0,-0.18)$)
		{$\begin{aligned}
				(\Sigma_{tt}^{+})^{-1} & =(\Sigma_{tt}^{-})^{-1}+H_t^\top{\color{knobR}R}^{-1}H_t \\
				                       & =(\Sigma_{tt}^{-})^{-1}
				+\underbrace{{\color{knobR}\sigma^{-2}}
					\textstyle\sum_{i=1}^{{\color{knobVox}N}}n_in_i^\top}_{\text{credit }\mathcal{J}}
			\end{aligned}$};
		\node[lbl, anchor=north west] (eqd) at ($(eqi.south west)+(0,-0.15)$)
		{${\color{knobR}R}=\sigma^{2}I$,\quad
			$H_t=[\,n_1^\top;\;\dots;\;n_{{\color{knobVox}N}}^\top\,]$,\quad
			$\lVert n_i\rVert=1$};
		\draw[black!25, line width=0.5pt]
		(11.35,1.95) -- ([yshift=-1.5mm] 11.35,0 |- eqd.south);
	\end{tikzpicture}%
}

\newif\ifanon
\ifdefined\ANONYMOUS\anontrue\else\anonfalse\fi
\newcommand{\SQUARE}{\ifanon\else SQUARE\fi}
\newcommand{\FRAMEURL}{\ifanon \href{https://anonymous.4open.science/r/smfeval-88EB}{\nolinkurl{anonymous.4open.science/r/smfeval-88EB}} \else \href{https://github.com/svendbot/smfeval}{\nolinkurl{github.com/svendbot/smfeval}}\fi}

\title {\LARGE \bf You Should Be Properly Scoring Your Odometry}

\ifanon
	\author {Anonymous submission}
\else
	\author{Ola R\o nning$^{1}$, Usama Saqib$^{1}$, and Andrzej Wąsowski$^{1}$%
		\thanks{$^{1}$Authors are with Software Quality Research Group, IT University of Copenhagen, Denmark. {\tt\small \{oroe,usamas,wasowski\}@itu.dk}}%
	}
\fi

\begin {document}

\maketitle

\begin {abstract}
When we evaluate the performance of our odometry, it is common
practice to score the estimated track against a ground truth.
Unfortunately, scoring uses point metrics, such as the root mean
square error, that ignore the covariance matrix which estimators
like filters and smoothers already report. Using the covariance
matters for two reasons. First, the covariance encodes the
estimator's uncertainty, so it tells us whether the estimator
trusts its own output. An overconfident estimator will not
report itself lost. Second, the covariance weights the error in
each direction of the estimate. Without the covariance, an
estimator is unduly penalized for a high error in an uncertain
direction. Instead of point metrics, we should use strictly
proper scoring rules. These rules score the estimate together
with its reported uncertainty. Strictly proper scoring rules
recover the point metrics when no covariance is reported, and
they diagnose covariance inconsistency when covariance is
reported. Using a one-sided pairwise test, we show that two
estimators can expose overconfidence in at least one of them
without a ground truth. Strictly proper scoring rules and our
pairwise test are available in our open-source framework
\texttt{smfeval}. As a case study, we use \texttt{smfeval} to
assess the uncertainty
quality of the translational component of ground-based
LiDAR-inertial odometry. Across four filters we find
overconfidence -- the worst case reports centimeter certainty
with kilometer error. Knowing the filters are overconfident, we
investigate the mechanism. The investigation traces
overconfidence to filters crediting LiDAR measurements with more
new information than they carry.

\end {abstract}

\vspace{1\baselineskip minus 1\baselineskip}
\section {Introduction}
\noindent
Odometry methods, such as filters and smoothers, estimate a
\emph{belief} over vehicle positions. Unfortunately, standard
practice for \emph{evaluating} odometry methods disregards
beliefs. First, each belief is reduced to a single point
estimate of the position, and the position error is quantified
by Euclidean distance of that single point from the ground
truth. Second, the resulting sequence of errors is reduced to a
single score by taking their average. Uncertainty in that
average is typically reported as if the errors were independent.
Both these reductions discard covariance, though of different
kinds. \looseness -1

The Euclidean distance ignores the covariance of beliefs across
the dimensions of a position. That covariance determines the
shape of the volume within which we expect to find the vehicle.
Under the Euclidean distance, the volume is treated as a sphere,
so an error along a direction of high uncertainty is penalized
just as strongly as one in a direction where the model is less
confident (low uncertainty). Meanwhile, consecutive errors are
correlated, as consecutive ground truth and estimated track
positions are correlated. Treating them as independent narrows
the bars on the reported error score. This narrowing can make
statistically indistinguishable methods look distinct and end up
ranked. For a physical system, this may also mean that errors
are wrongly estimated, often underestimated.

\begin{figure}[t]
	\centering

	\includegraphics [
		width = .8 \columnwidth,
		trim = 0mm 2mm 0mm 2mm,
		clip
	] {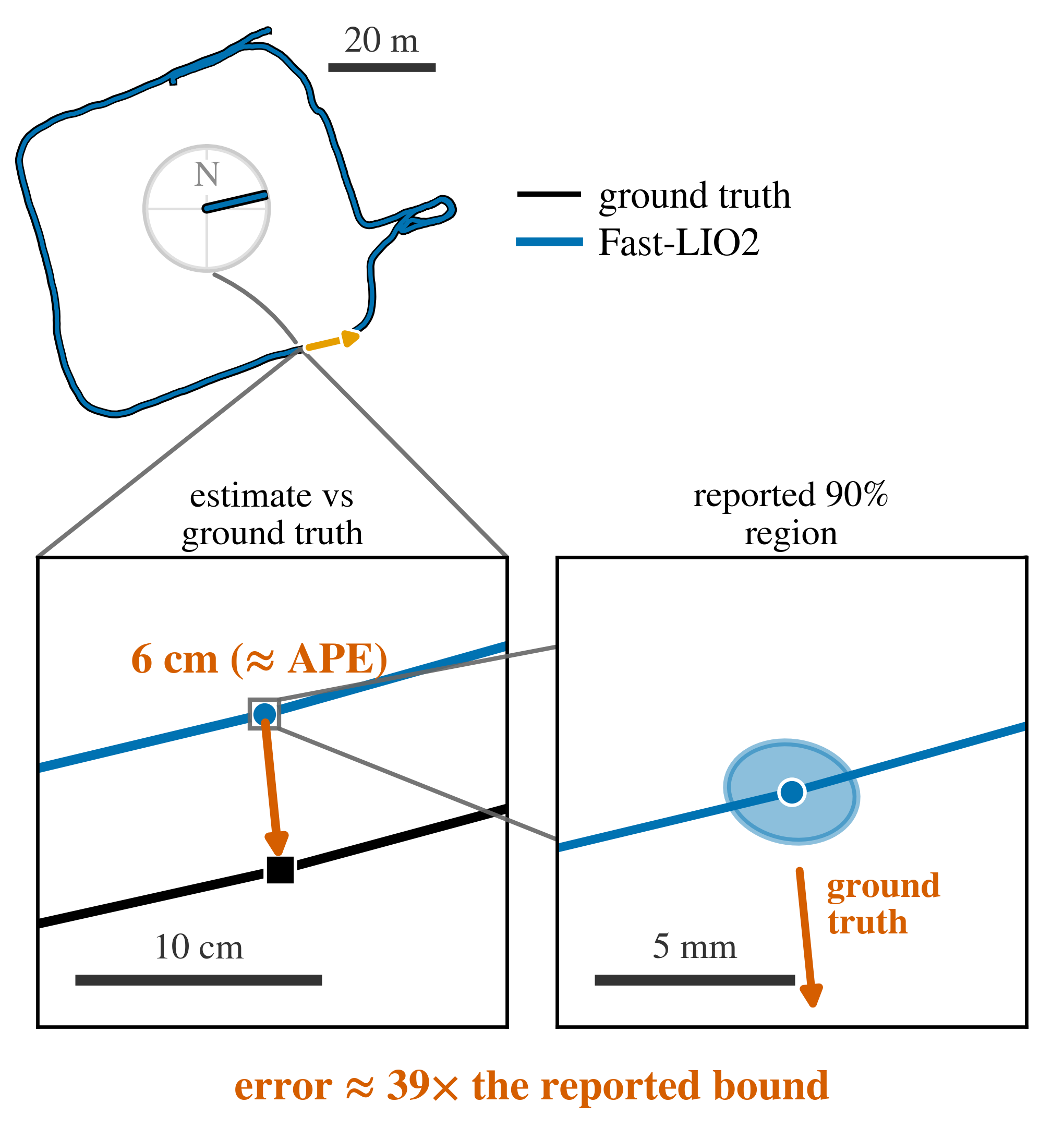}

	\vspace{-1mm plus 1pt minus 2pt}

	\caption{Error against reported certainty for Fast-LIO2 on
	         the Oxford Spires track keble-college-02.\label{fig:overconfidence}}
	\vspace{-1.6ex plus 2pt minus 3pt}

\end{figure}

Gneiting and Raftery \cite{gneiting2007strictly} define
\emph{strictly proper scoring rules} for scoring reported
beliefs against an observed outcome in general. Under these
rules, the expected score is best only if the belief is
consistent with the distribution of outcomes. A single outcome
may score well by chance; only considering many outcomes allows
us to identify a matching belief (a win), and penalize beliefs
too wide or too narrow (a loss). In this work, we apply the
strictly proper scoring rules to odometry evaluation. For track
estimation, the outcome at each time step is the reference
position, so an odometry method is rewarded for assigning
uncertainty that matches the errors it makes. \looseness -1

Crucially, the reference (often designated as ground truth) is
\emph{itself a measurement} and carries \emph{uncertainty of its
own}. Despite this, popular published datasets present it as a
point trajectory, so its uncertainty cannot enter the score. As
a result every odometry method is judged against a reference
that is more certain than it truly is. To emphasize the
uncertainty of the reference, we use the term \emph{gold
standard} for the reference distribution, reserving the term
\emph{ground truth} for single point estimates. \looseness -1

Once the individual positions are scored, the track score is
typically defined as their average. This is not a problem in
itself, but standard tools, such as
\texttt{evo}\,\cite{grupp2017evo}, report too narrow error bars
because they treat scores as independent. Instead, we propose to
use the block bootstrap\,\cite{kunsch1989jackknife}, which
resamples contiguous stretches of the track so that temporal
dependence is preserved. The block length is chosen to capture
the range of dependencies in the sequence of
scores\,\cite{politis2004automatic}. The resulting error bars
are then correctly widened.

We apply the above evaluation methodology to four LiDAR-inertial
odometry filters on ground-based vehicles. Scoring the beliefs
properly shows that the filters are overconfident. In the easy
setting, the filters report millimeter certainty and err by
centimeters (\cref{fig:overconfidence}). In the hard setting,
reported uncertainty widens to centimeters, but every filter
diverges on at least 21 of 55 tracks, and the runs that converge
err by decimeters. The worst case, an urban tunnel, pairs
centimeter certainty with kilometer error (\cref{fig:tunnel}).
Sweeping the configuration parameters traces the overconfidence
to the update step of the filters. The same sweep also reveals
that the covariance is set by the configuration, not by the
residuals. The covariance a planner or fusion node consumes is
therefore decoupled from the surroundings, and a vehicle
depending on these filters cannot tell that it is lost.

The paper makes the following contributions:
\begin{itemize}

	\item A proposal of a methodology for properly scoring
	      odometry track estimations, treating uncertainty as
	      first-class, based on the rules of Gneiting and
	      Raftery \cite{gneiting2007strictly}.

	\item A case study of four LiDAR-inertial odometry filters.
	      We test 67 tracks spanning a feature-rich and a
	      degenerate dataset. The study finds that the reported
	      covariances are $2$ to $7$ orders of magnitude too
	      tight. A parameter sweep over fixed tracks traces the
	      tightness to the configuration, showing that the
	      reported covariance is set predominantly by the
	      parameters rather than by the residuals. \looseness -1

	\item A pairwise overconfidence test (\cref{prop:pairwise})
	      based on the disagreement between two filters. It
	      applies wherever a gold standard is unavailable or
	      itself uncertain, and it is conservative: disagreement
	      beyond what the two covariances allow implies that one
	      filter is overconfident, while agreement is
	      inconclusive.

	\item An open-source framework (\FRAMEURL{}) that implements
	      the evaluation procedure of \cref{sec:case_study},
	      with proper scoring rules, pairwise tests,
	      block-bootstrap intervals, and the \SQUARE{} storage
	      format for estimator beliefs. 

\end{itemize}
We aim to encourage odometry researchers to consider error
beliefs when evaluating methods, raising the assessment
standards for research and precision reporting in this field.
\looseness -1


\section{Related work}
\noindent
Track evaluation is standardized around covariance-free metrics.
The TUM RGB-D benchmark established the convention for metrics
\cite{sturm2012tum}, the \texttt{evo}
tool\,\cite{grupp2017evo} computes them, and
Zhang and Scaramuzza \cite{zhang2018tutorial} codified the track alignment
and aggregation practice. Zhang and
Scaramuzza \cite{zhang2019rethinking} model the
gold-standard track as a continuous-time Gaussian process and
recast pose error as a likelihood, but the estimate remains a
point with no covariance. What we propose instead is to use
proper scoring to measure the accuracy and uncertainty of
odometry jointly. We support validation and ranking of methods.
\looseness -1

The statistic computed by our gold-standard-free test
(\cref{prop:pairwise}) has appeared twice before. Track-to-track
association\,\cite{barshalom1981track} scores the disagreement
between two tracks against their combined covariances to decide
whether the tracks share a target. Similarly,
solution-separation integrity monitoring
\cite{joerger2014solution} compares a position estimate from all
measurements against estimates that each exclude one
measurement, and flags a fault when they disagree. Both works
trust the reported covariances and attribute any disagreements
to other causes. We interpret a disagreement as evidence against
the reported covariances.

Two known effects could explain the overconfidence in our case
study (\cref{sec:case_study}), but do not. First, an extended
Kalman filter linearizes at the estimate rather than the truth,
so each update shrinks the covariance slightly along directions
the measurements do not observe, until after hundreds of updates
the covariance no longer covers the error
\cite{julier2001counter}. The filters we use are already far too
tight after a single update. Second, the closed-form covariance
of point-to-plane registration \cite{censi2007accurate}
understates the error \cite{brossard2020new} because an error
shared by all points of a scan, such as a calibration bias,
averages away under the independent-noise assumption. We test
this directly by integrating such a shared offset out of the
likelihood. Doing so recovers only a small fraction of the gap.


\section{Scoring position belief}\label{sec:scoring_pose_belief}
\noindent
We first recall the classic single point error definition. At
each scan $i \in \{1,\dots,n\}$ the estimator reports a position
$\hat t_i\in\mathbb{R}^3$ and the gold standard supplies $t_i$.
After SE(3) alignment of the two tracks, the position error is
the residual $e_i=\hat t_i-t_i$, and its Euclidean norm is the
absolute pose error (APE). The track score
based on APE is classically defined as the root mean square error
(RMSE) or the mean absolute error (MAE). \looseness -1
\[
	\mathrm{RMSE}=\sqrt{\tfrac1n\textstyle\sum_{i=1}^n\lVert e_i\rVert_2^2},
	\quad
	\mathrm{MAE}=\tfrac1n\textstyle\sum_{i=1}^n\lVert e_i\rVert_2.
\]
Consider an example of a robot traveling along a line with an
  error of \SI{2}{cm} for three consecutive scans, so
  $\mathrm{RMSE}=\mathrm{MAE}=\SI{2}{cm}$. Note that these
  summaries are the same for estimators that report variance of
  \SI{4}{cm^2} for each scan (consistent with the position
  error) and \SI{4}{mm^2} (far from it).

The alternative to APE is the relative pose error. This error
localizes drift within the track by the position error over a
window of $\Delta$ scans,
$e_i^{\Delta}=(\hat t_{i+\Delta}-\hat t_i)-(t_{i+\Delta}-t_i)$
\cite{sturm2012tum}. The rules of this section apply unchanged
for $e_i^{\Delta}$ instead of $e_i$, if the estimator reports a
covariance over $\hat t_{i+\Delta}-\hat t_i$.

\subsection{Proper scoring}

\noindent
Estimators, like filters and smoothers, report a distribution \(
Q \) over the position, known as its belief, for instance, for a
Kalman filter $Q=\mathcal{N}(\hat t,\Sigma)$. A scoring rule
$S(Q,t)$ assigns the estimator belief a score given that the gold
standard reports $t$ for the same scan.  The rule $S(Q,t)$ is
\emph{proper} when no belief scores better, in expectation, than
the reference distribution $t \sim P$ of the gold standard,
\begin{equation}
	\mathbb{E}_{t\sim P}[\,S(P,t)]\le\mathbb{E}_{t\sim P}[\,S(Q,t)]
	\quad\text{for every belief }Q,
	\label{eq:properness}
\end{equation}
and \emph{strictly proper} when equality above forces that
$Q=P$\,\cite{gneiting2007strictly}.

\looseness -1
Note that being proper recognizes that the gold standard carries
uncertainty itself (otherwise the requirement is vacuous).  But
being proper is not enough. Consider the scoring rule assigning
the score $S(Q,t)=\lVert\hat t-t\rVert^2$ to position $t$ with
$\hat t$ the belief mean (this is the non-aggregated core of
RMSE). The rule is proper because it is minimized by $\hat
	t=\mu_P$ (cf.\,\cref{eq:properness}). But this is true for
\emph{any} belief with the correct mean. A \emph{strictly proper}
rule enforces that the truthful belief is the \emph{unique}
minimizer, so any distortion, an inconsistent covariance
included, strictly raises the expected score. We advocate
strictly proper rules. (Hereafter we write \emph{proper} meaning
\emph{strictly proper}.)

\subsubsection*{Distance-sensitive proper rules}
A distance-sensitive rule scores the belief by how far its
probability mass lies from the gold standard. A narrow belief
close to the gold standard scores better than one far away.
Consider the energy score as an example,
\[
	\mathrm{ES}_\beta(Q,t)
	=\mathbb{E}_{X\sim Q}[\,\lVert X-t\rVert_2^\beta]
	-\tfrac12\,\mathbb{E}_{X,X'\overset{\text{iid}}{\sim}Q}[\,\lVert
			X-X'\rVert_2^\beta],
\]
which is strictly proper for $\beta \in (0,2)$ whenever $Q$ has
a finite mean \cite{gneiting2007strictly}. We write
$\mathrm{ES}$, if $\beta=1$. The second term of the energy score
discounts the spread the belief reports, so hedging with an
inflated covariance is penalized. The energy score recovers the
APE if $\beta=1$ and $\Cov(Q)$ approaches
zero\,\cite{gneiting2007strictly}. For our example of a robot
moving along a line, the honest reported belief variance
(\SI{4}{cm^2}) is scored \SI{1.20}{cm} per scan and wins with
the overconfident variance of \SI{4}{mm^2} that receives a score
of \SI{1.89}{cm}. But as covariance $\Cov(Q)$ approaches zero,
the score concentrates near APE, making tight covariances hard
to rank with $\mathrm{ES}$. Instead, we need a scoring that
diverges as the covariance shrinks, so that tight beliefs remain
distinguishable.

\subsubsection*{Density-based proper rules}

A density-based rule instead consults the density the belief
assigns to the gold standard itself. Essentially the only
density-based rule \cite{bernardo1979expected} is the
logarithmic score. For a belief $Q$ with density $q$, we have
\[
	\mathrm{LogS}(Q,t)=-\log q(t).
\]
The score is measured in \emph{nats}. A belief scores one
  \emph{nat} lower than another when it assigns $e$ times the
  density to the gold standard. In our example, the honest
  belief receives $2.1$ nats per scan and the overconfident one
  receives $49$. Where the energy score separated the two
  beliefs by half, the log score separates them by a factor of
  over twenty. Now, understating the covariance drives the
  density at the gold standard toward zero and the score becomes
  arbitrarily large. \looseness -1

\subsubsection*{Log score of a Gaussian belief}
Many Kalman filter variants emit Gaussian beliefs
$\mathcal{N}(\hat t,\Sigma)$, for which the log score splits
into
\begin{equation}
	-\log q(t)
	= \tfrac12\, (\underbrace{\lVert e\rVert^2_{\Sigma}}_{\text{consistency}}
	+ \underbrace{\log\det\Sigma}_{\text{sharpness}})
	+ \tfrac32\log 2\pi.
	\label{eq:logscore}
\end{equation}
The consistency term is the square of the Mahalanobis distance
$\lVert e\rVert_\Sigma=(e^\top\Sigma^{-1}e)^{1/2}$, also called
normalized estimation error squared (NEES), a standard
consistency statistic \cite{bar2001estimation}, and
$e = \hat t -t$ is the difference from the belief mean, like
above. The consistency term penalizes a centimeter of error
along an axis where the belief reports millimeters. When the
error is zero-mean and the reported covariance is the true error
covariance, the scaled error is standard normal and the NEES
follows a $\chi^2_3$ distribution, however sharp the belief is.
The sharpness term measures how concentrated, i.e., peaked, the
belief is \cite{gneiting2007probabilistic}. We call
$-\log\det\Sigma$ the \emph{reported precision}, the log of the
generalized precision $\det\Sigma^{-1}$.

The log score penalizes consistency linearly and sharpness
logarithmically. We see this clearly in our example. The errors
are \SI{2}{cm} and the scalar constant is
$\tfrac12\log2\pi\approx0.9$. The honest \SI{4}{cm^2} belief
scores $0.5$ nats of consistency and $0.7$ of sharpness,
recovering the $2.1$ per scan (see above). The overconfident
\SI{4}{mm^2} belief drops sharpness to $-1.6$ nats, but
increases consistency by $50$. So the hundredfold decrease in
reported variance is penalized heavily because of the asymmetry
between sharpness and consistency. \looseness -1

Consistency and sharpness depend on different degrees of freedom
of $\Sigma$. Write $\Sigma=U\Lambda U^\top$ with $U$ orthogonal
and $\Lambda=\mathrm{diag}(\lambda_1,\lambda_2,\lambda_3)$, so
that the $1$-$\sigma$ ellipsoid of $\mathcal{N}(\hat t,\Sigma)$
has axes along the columns of $U$ with lengths
$\sqrt{\lambda_i}$. We call $(\det\Sigma)^{1/3}$ the scale of
$\Sigma$, and the axis directions $U$ together with the
normalized lengths $\Lambda/(\det\Sigma)^{1/3}$ its shape.
Rescaling $\Sigma\mapsto c\Sigma$ multiplies the scale by $c$
and leaves the shape fixed. The following proposition makes the
asymmetry exact. \looseness -1

\begin{proposition}[Scaling law of the Gaussian log score]\label{prop:scaling}
	Let $\Sigma$ be a $3{\times}3$ covariance,
	$e\in\R^3$ be a fixed error, $c>0$, and $s=3\ln(1/c)$. Then

	\noindent(i) $\lVert e\rVert^2_{c\Sigma}=\tfrac1c\lVert e\rVert^2_\Sigma$ and
	$\log\det(c\Sigma)=\log\det\Sigma-s$.

	\noindent(ii) The log score \cref{eq:logscore} of
	$\mathcal{N}(\hat t,c\Sigma)$ is
	\begin{equation}
		\mathrm{LogS}(s)=\tfrac12\,\lVert e\rVert^2_\Sigma\,e^{s/3}-\tfrac{s}{2}+C,
	\end{equation}
	with $C=\tfrac12\log\det\Sigma+\tfrac32\log2\pi$ collecting
	the terms free of $s$. Its derivative
	${\mathrm{d}\,\mathrm{LogS}/\mathrm{d}s=(\lVert e\rVert^2_{c\Sigma}-3)/6}$
	changes sign once, so $\mathrm{LogS}(s)$ has a unique
	minimum, at $\lVert e\rVert^2_{c\Sigma}=3$.\\ \noindent(iii)
	If $\Sigma_e=\mathbb{E}[ee^\top]$ is positive definite, let
	$A=\Sigma^{-1}\Sigma_e$ and define the shape penalty
	\begin{equation}\label{eq:shape-penalty}
		\psi(\Sigma)=\frac{\tr A}{3\,(\det A)^{1/3}}\ge1,
	\end{equation}
	with equality if and only if $\Sigma$ is a scalar multiple
	of $\Sigma_e$. Then
	$\mathbb{E}\lVert e\rVert^2_\Sigma=\tr A =3\psi(\Sigma)\,(\det\Sigma_e/\det\Sigma)^{1/3}$,
	and $\psi(c\Sigma)=\psi(\Sigma)$ for every $c>0$.
\end{proposition}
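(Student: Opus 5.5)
The proof is a direct computation, organised along the three parts of \cref{prop:scaling}.

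For (i), I use $(c\Sigma)^{-1}=c^{-1}\Sigma^{-1}$, which gives $\lVert e\rVert^2_{c\Sigma}=e^\top(c\Sigma)^{-1}e=\tfrac1c\lVert e\rVert^2_\Sigma$. For the log-determinant, $\det(c\Sigma)=c^3\det\Sigma$ because $\Sigma$ is $3\times3$. Hence $\log\det(c\Sigma)=\log\det\Sigma+3\log c=\log\det\Sigma-s$, with $s=3\ln(1/c)$ as in the statement.

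For (ii), I substitute (i) into \cref{eq:logscore}. Since $1/c=e^{s/3}$, the consistency term becomes $\tfrac12\lVert e\rVert^2_\Sigma e^{s/3}$ and the sharpness term becomes $\tfrac12(\log\det\Sigma-s)$. The $s$-free remainder is exactly $C$. Differentiating in $s$ gives
\[
\tfrac16\lVert e\rVert^2_\Sigma e^{s/3}-\tfrac12=\bigl(\lVert e\rVert^2_{c\Sigma}-3\bigr)/6,
\]
where I use (i) once more. If $e\neq0$, the derivative is strictly increasing in $s$ and runs from $-\tfrac12$ to $+\infty$, so it has exactly one zero, and $\mathrm{LogS}$ is strictly convex. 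This gives a unique minimum where $\lVert e\rVert^2_{c\Sigma}=3$. The degenerate case $e=0$ has no minimiser, because the score decreases without bound. I will note this case explicitly and implicitly assume $e\ne0$, as the statement does.

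For (iii), linearity of trace and expectation give $\mathbb{E}\lVert e\rVert^2_\Sigma=\mathbb{E}\tr(\Sigma^{-1}ee^\top)=\tr(\Sigma^{-1}\Sigma_e)=\tr A$. The claimed factorisation is then the definition of $\psi$ rearranged, using $\det A=\det\Sigma_e/\det\Sigma$. For scale invariance, $A(c\Sigma)=c^{-1}A$. This scales $\tr A$ by $c^{-1}$ and $(\det A)^{1/3}$ by $c^{-1}$ as well, so $\psi(c\Sigma)=\psi(\Sigma)$.

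The main step is the inequality $\psi\ge1$ together with its equality case. I would handle it as follows. The matrix $A=\Sigma^{-1}\Sigma_e$ is similar to the symmetric positive definite matrix $\Sigma^{-1/2}\Sigma_e\Sigma^{-1/2}$. It therefore has positive real eigenvalues $\mu_1,\mu_2,\mu_3$, with $\tr A=\sum\mu_i$ and $\det A=\prod\mu_i$. AM--GM then gives $\tfrac13\sum\mu_i\ge(\prod\mu_i)^{1/3}$, with equality if and only if all $\mu_i$ are equal. In that case the symmetric matrix $\Sigma^{-1/2}\Sigma_e\Sigma^{-1/2}$ has a single eigenvalue, so it equals $\mu I$, and therefore $\Sigma_e=\mu\Sigma$. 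The converse direction is immediate. The only care needed is to route the argument through the symmetrised matrix, because $A$ itself need not be symmetric.
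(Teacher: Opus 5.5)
Your proposal is correct and follows the paper's proof step for step. Part (i) uses $(c\Sigma)^{-1}=\Sigma^{-1}/c$ and $\det(c\Sigma)=c^3\det\Sigma$, part (ii) substitutes and differentiates in $s$, and part (iii) applies AM--GM to the positive eigenvalues of $A$ and uses $A\mapsto A/c$. Your two additions make explicit points the paper's proof leaves implicit: routing the equality case through the symmetric matrix $\Sigma^{-1/2}\Sigma_e\Sigma^{-1/2}$, and noting that the unique minimum in (ii) requires $e\neq0$.
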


\noindent
\textit{Proof.} (i) is $(c\Sigma)^{-1}=\Sigma^{-1}/c$ and
$\det(c\Sigma)=c^3\det\Sigma$. (ii) substitutes (i) into
\cref{eq:logscore} and differentiates in $s$. (iii) is the
inequality between the arithmetic and the geometric mean of the
three eigenvalues of $A$, which are positive, and
$\psi(c\Sigma)=\psi(\Sigma)$ because $\Sigma\mapsto c\Sigma$
sends $A\mapsto A/c$, so that $\tr A\mapsto\tr A/c$ and
$(\det A)^{1/3}\mapsto(\det A)^{1/3}/c$, and the factor $1/c$
cancels in \cref{eq:shape-penalty}.\qed

\medskip

\noindent
Any belief can be made consistent by scaling, but a consistent
belief cannot be made sharper by scaling. By (i), rescaling
$\Sigma$ by $c$ divides the NEES by $c$, so some $c$ brings
$\mathbb{E}\lVert e\rVert^2_\Sigma$ to $3$ whatever the shape of
$\Sigma$. Setting $\mathbb{E}\lVert e\rVert^2_\Sigma=3$ in (iii)
gives $\det\Sigma=\psi^3\det\Sigma_e$, and since any other $c$
breaks consistency, this is the sharpness of the consistent
belief. Consistency thus fixes the scale of $\Sigma$ and says
nothing about its shape, which the NEES does not measure. \looseness -1

\subsubsection*{Inflation}
The \emph{inflation}
\begin{equation}
	k = \frac{\med(\lVert e\rVert^2_\Sigma)}{\med(\chi^2_3)}
	\label{eq:k}
\end{equation}
is the factor by which the reported covariance must be scaled
for the median NEES to reach its calibrated value
$\med(\chi^2_3)\approx2.37$; a consistent filter has $k=1$. The
median makes $k$ insensitive to rare extreme errors that would
dominate a mean. \looseness -1

\subsubsection*{Gold-standard-free overconfidence test}
A second estimator can stand in for the gold standard when
testing for overconfidence. Let estimators $A$ and $B$ report
$(\hat t_A,\Sigma_A)$ and $(\hat t_B,\Sigma_B)$ for the same
scan, with errors $e_A$ and $e_B$, and write
$d=\hat t_A-\hat t_B=e_A-e_B$ for their disagreement. The
\emph{pairwise inflation} is \cref{eq:k} with $d$ in place of
$e$ and $\Sigma_A+\Sigma_B$ in place of $\Sigma$,
\begin{equation}
	k_{\mathrm{pair}} = \frac{\med(\lVert
		d\rVert^2_{\Sigma_A+\Sigma_B})}{\med(\chi^2_3)},
	\label{eq:kpair}
\end{equation}
and measures how far the two reported covariances must be scaled
to match the disagreement.

\begin{proposition}[Pairwise overconfidence]\label{prop:pairwise}
	Let the pair $A,B$ have jointly Gaussian, zero-mean errors
	$e_A,e_B$ with cross-covariance $C_{AB}=\Cov(e_A,e_B)$, and
	write $M=\Sigma_A+\Sigma_B$ and $V=\Cov(e_A)+\Cov(e_B)$. Let
	$a_X$ be the largest eigenvalue of
	$\Sigma_X^{-1/2}\Cov(e_X)\Sigma_X^{-1/2}$ for $X\in\{A,B\}$.
	Then

	\noindent(i) $\Cov(d)=V-(C_{AB}+C_{AB}^\top)$.

	\noindent(ii) If $\Sigma_A=\Cov(e_A)$, $\Sigma_B=\Cov(e_B)$ and
	$C_{AB}=0$, then $\lVert d\rVert^2_M\sim\chi^2_3$ and
	$k_{\mathrm{pair}}=1$.

	\noindent(iii) If $C_{AB}+C_{AB}^\top\succeq0$, then
	$\max(a_A,a_B)\ge k_{\mathrm{pair}}$; in particular, if
	$k_{\mathrm{pair}}>1$ then $\Sigma_A\succeq\Cov(e_A)$ and
	$\Sigma_B\succeq\Cov(e_B)$ cannot both hold.
\end{proposition}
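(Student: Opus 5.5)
The plan is to treat $k_{\mathrm{pair}}$ at the population level, i.e.\ to read $\med$ in \cref{eq:kpair} as the median of the distribution of $\lVert d\rVert^2_M$ rather than a sample median. Under the hypotheses, $d=e_A-e_B$ is a linear image of the jointly Gaussian vector $(e_A,e_B)$, so $d$ is zero-mean Gaussian. Everything then reduces to comparing $\Cov(d)$ with $M$ in the Loewner order, followed by a stochastic-dominance argument for quadratic forms in a standard normal vector.

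For (i), I expand $\Cov(e_A-e_B)$ by bilinearity of the covariance. This gives $\Cov(e_A)+\Cov(e_B)-\Cov(e_A,e_B)-\Cov(e_B,e_A)$. Since $\Cov(e_B,e_A)=C_{AB}^\top$, the result is $V-(C_{AB}+C_{AB}^\top)$.

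For (ii), under the stated hypotheses (i) gives $\Cov(d)=M$. Write $d=M^{1/2}z$ with $z\sim\mathcal N(0,I_3)$. Then $\lVert d\rVert_M^2=z^\top z\sim\chi^2_3$, so its median is $\med(\chi^2_3)$ and $k_{\mathrm{pair}}=1$.

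For (iii), I proceed in four steps.
\begin{enumerate}
	\item By definition of $a_X$ we have $\Sigma_X^{-1/2}\Cov(e_X)\Sigma_X^{-1/2}\preceq a_X I$. Conjugating by $\Sigma_X^{1/2}$ gives $\Cov(e_X)\preceq a_X\Sigma_X$.
	\item Set $a=\max(a_A,a_B)$ and sum the two bounds. Together with $C_{AB}+C_{AB}^\top\succeq0$ and (i), this yields
	\[
		\Cov(d)\preceq V\preceq a M .
	\]
	\item Let $B=M^{-1/2}\Cov(d)M^{-1/2}$, so that $0\preceq B\preceq aI$. Writing $d=M^{1/2}B^{1/2}z$ gives $\lVert d\rVert^2_M=z^\top Bz$. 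Diagonalizing $B=W\,\mathrm{diag}(b_i)W^\top$ and using rotation invariance of $z$, this quadratic form has the law of $\sum_i b_i z_i^2$. That sum is bounded pointwise by $a\sum_i z_i^2$.
	\item Hence $\lVert d\rVert^2_M$ is stochastically dominated by $a\chi^2_3$. Its median is therefore at most $a\,\med(\chi^2_3)$, which is exactly $k_{\mathrm{pair}}\le a$.
\end{enumerate}
The final clause follows by contraposition. The conditions $\Sigma_X\succeq\Cov(e_X)$ are equivalent to $a_X\le1$, so if both held, then $k_{\mathrm{pair}}\le1$.

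The main obstacle is the passage from the Loewner bound to the median. The median is not monotone under arbitrary changes of distribution, so I rely on the pointwise coupling through the same $z$ to obtain first-order stochastic dominance, which median respects. A second point of care is that $\Cov(d)$ may be singular, which is why I use $B^{1/2}$ rather than an inverse. I would also remark that the statement concerns the population median; for the empirical $k_{\mathrm{pair}}$ it holds only asymptotically or in the sense of a test.
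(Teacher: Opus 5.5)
Your proposal is correct and follows the paper's proof essentially step for step. You expand $\Cov(e_A-e_B)$ for (i) and whiten by $M^{1/2}$ for (ii); for (iii) you chain $\Cov(d)\preceq V\preceq aM$, diagonalize $M^{-1/2}\Cov(d)M^{-1/2}$, and use the pointwise bound $\sum_i b_i z_i^2\le a\sum_i z_i^2$ to compare medians. Your remarks on a possibly singular $\Cov(d)$ and on reading $\med$ as a population median are sound refinements that the paper leaves implicit.
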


\noindent
\textit{Proof.} (i) follows by expanding $\Cov(e_A-e_B)$. In (ii),
$M=\Cov(d)$ by (i), so $M^{-1/2}d$ is standard normal. For (iii),
write $a=\max(a_A,a_B)$, so that $\Cov(e_X)\preceq a\Sigma_X$ for
both $X$ and $aM\succeq V\succeq\Cov(d)$, the second step by (i)
and the hypothesis on $C_{AB}$. Hence all eigenvalues $\lambda_i$
of $M^{-1/2}\Cov(d)M^{-1/2}$ satisfy $\lambda_i\le a$, and
$\lVert d\rVert^2_M$ is distributed as $\sum_i\lambda_i z_i^2$
with $z_i$ independent standard normal. Since
$\sum_i\lambda_i z_i^2\le a\sum_i z_i^2$ pointwise and
$\sum_i z_i^2\sim\chi^2_3$, the median of $\lVert d\rVert^2_M$
is at most $a\,\med(\chi^2_3)$, so $k_{\mathrm{pair}}\le a$. If
both estimators are calibrated then $a\le1$, giving the
particular case.\qed
\bigskip

\noindent
\Cref{prop:pairwise} makes $k_{\mathrm{pair}}$ a
one-sided test.  A value above one, or an
interval that excludes one, says that at least one estimator
understates its error covariance, but not which one. The bound
assumes the two errors are not negatively correlated; error the
estimators share, from a common map, sensor or calibration,
cancels in $d$ and only makes the test more conservative.
\looseness -1

Returning to our example, let two estimators disagree by
\SI{2}{cm} at every scan, one reporting a variance of
\SI{1}{mm^2} and the other \SI{4}{mm^2}. The pairwise NEES is
$80$, against the $\chi^2_1$ median of ${\sim}0.455$ for scalar
positions. The pairwise inflation is
$k_{\mathrm{pair}}\approx176$, so by \cref{prop:pairwise} at
least one of the two underreports its variance. Had the
estimators reported \SI{100}{mm^2} and \SI{400}{mm^2} instead,
the same disagreement gives $k_{\mathrm{pair}}\approx1.8$, so
the test does not detect overconfidence. This, however, does not
guarantee calibration, since correlated errors shrink $d$. \looseness -1

\begin{figure}[t]
	\centering
	\vspace{1.5mm plus 1pt minus 2pt}

	\includegraphics [
		width = .84 \columnwidth,
		trim = 0mm 1mm 0mm 1mm,
		clip
	] {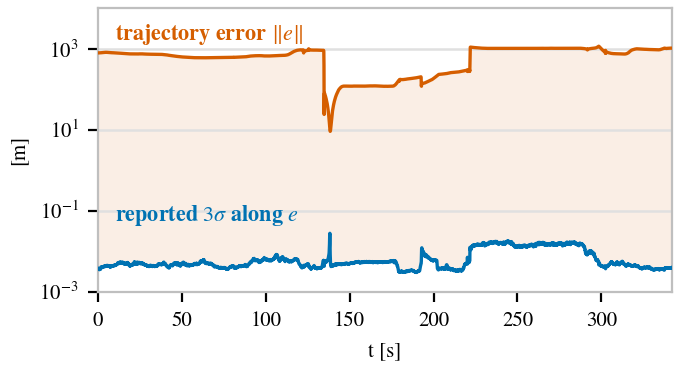}

	\vspace{-1mm plus 1pt minus 2pt}

	\caption{
		Error against reported $3\sigma$ for I2EKF-LO on GEODE
		track.
		\label{fig:tunnel}}%

\end{figure}

\vspace{1 \baselineskip minus 1 \baselineskip}

\subsection{Track summaries}
\noindent
All scores discussed so far apply to a single scan, while
published benchmarks tend to report per-track summaries.  Since
position errors are \emph{autocorrelated}, the error at a scan is
also correlated with the errors at the scans before it; for
example, drift carries over from scan to scan. A track summary is
therefore an estimate whose precision rests on how many of the
$n$ position errors are effectively independent. Treating all
scans as independent overstates the evidence available in the
entire track and tends to inflate the certainty of the
summary. The \emph{block bootstrap} \cite{kunsch1989jackknife}
respects that dependence by resampling contiguous track stretches
rather than single scans \cite{efron1979bootstrap}, with a block
longer than the correlation length of the errors
\cite{politis2004automatic}. The confidence interval is then the
spread of the replicates.
\looseness -1

Let the per-scan score fluctuate with standard deviation
\SI{1}{cm} and lag-one autocorrelation $\rho=0.99$, the
correlation between the errors of consecutive scans, over a
track of $n=1000$ scans. That autocorrelation gives a
correlation length of $(1+\rho)/(1-\rho)\approx200$ scans. With
the scans treated as independent, the $95\%$ interval on the
track mean is $\pm\SI{0.06}{cm}$. The autocorrelation inflates
the variance of the mean by a factor of $179$, so the honest
interval is $\pm\SI{0.83}{cm}$, thirteen times wider. \looseness
-1


\section{Case study: LiDAR-inertial odometry}\label{sec:case_study}
\begin{table*}[tp]
	\topfloatpad
	\centering
	\small
	\caption{Translation accuracy and consistency across
	         filters.}
	\label{tab:consistency}
	\adjustbox{max width=\textwidth}{\setlength{\tabcolsep}{4pt}\begin{tabular}{l r r@{\;}r r@{\;}r r@{\;}r r@{\;}r r}
\toprule
Filter & $n$ & \multicolumn{2}{c}{MAE [m]\,($\downarrow$)} & \multicolumn{2}{c}{Energy [m]\,($\downarrow$)} & \multicolumn{2}{c}{$\mathrm{LogS}$ [nat]\,($\downarrow$)} & \multicolumn{2}{c}{$k$\,(${\to}1$)} & \multicolumn{1}{c}{90\% cov\,(${\to}0.9$)} \\
\cmidrule(lr){3-4}\cmidrule(lr){5-6}\cmidrule(lr){7-8}\cmidrule(lr){9-10}
\midrule
\multicolumn{11}{l}{\textit{Oxford Spires}} \\
Fast-LIO2 & 12 & \textbf{0.088} & $[0.060, 0.197]$ & \textbf{0.087} & $[0.059, 0.196]$ & $1{\times}10^{4}$ & $[4, 93]{\times}10^{3}$ & $5.6{\times}10^{3}$ & $[2.6, 16]{\times}10^{3}$ & 0.000 \\
Faster-LIO & 12 & \textbf{0.117} & $[0.062, 0.508]$ & \textbf{0.116} & $[0.061, 0.507]$ & $2.2{\times}10^{4}$ & $[9.1, 690]{\times}10^{3}$ & $1.4{\times}10^{4}$ & $[3.9, 480]{\times}10^{3}$ & 0.000 \\
Point-LIO & 12 & 0.230 & $[0.119, 0.481]$ & 0.225 & $[0.114, 0.475]$ & $\mathbf{1.5{\times}10^{3}}$ & $[4.8, 86]{\times}10^{2}$ & $\mathbf{5.7{\times}10^{2}}$ & $[1.3, 22]{\times}10^{2}$ & 0.000 \\
I2EKF-LO & 12 & 2.37 & $[0.856, 13.81]$ & 2.37 & $[0.855, 13.81]$ & $2.3{\times}10^{7}$ & $[3, 580]{\times}10^{6}$ & $6.7{\times}10^{6}$ & $[1.3, 240]{\times}10^{6}$ & 0.000 \\
GLIM\,(smoother) & 12 & 0.165 & $[0.089, 0.314]$ & -- &  & $1.1{\times}10^{2}$ & $[1.4, 35]{\times}10^{1}$ & $3{\times}10^{1}$ & $[1.3, 14]{\times}10^{1}$ & 0.043 \\
KISS-ICP & 12 & 0.708 & $[0.312, 1.04]$ & -- &  & -- &  & -- &  & -- \\
\midrule
\multicolumn{11}{l}{\textit{GEODE}} \\
Fast-LIO2 & 55 & 7.96 & $[2.75, 26.05]$ & 7.96 & $[2.74, 26.05]$ & $8.7{\times}10^{6}$ & $[7.6, 290]{\times}10^{5}$ & $2.2{\times}10^{6}$ & $[2.4, 140]{\times}10^{5}$ & 0.000 \\
Faster-LIO & 55 & 1.91 & $[0.708, 24.91]$ & 1.91 & $[0.706, 24.91]$ & $5.7{\times}10^{5}$ & $[1.4, 300]{\times}10^{5}$ & $2.8{\times}10^{5}$ & $[5.5, 2{,}100]{\times}10^{4}$ & 0.000 \\
Point-LIO & 55 & \textbf{0.772} & $[0.407, 31.48]$ & \textbf{0.758} & $[0.396, 31.47]$ & $\mathbf{3.3{\times}10^{3}}$ & $[1, 6{,}100]{\times}10^{3}$ & $\mathbf{1.6{\times}10^{3}}$ & $[7.3, 20{,}000]{\times}10^{2}$ & 0.000 \\
I2EKF-LO & 55 & 31.45 & $[14.05, 42.30]$ & 31.44 & $[14.34, 42.29]$ & $3.5{\times}10^{7}$ & $[1.3, 12]{\times}10^{7}$ & $2.3{\times}10^{7}$ & $[5.8, 44]{\times}10^{6}$ & 0.000 \\
KISS-ICP & 35 & 9.56 & $[0.441, 60.89]$ & -- &  & -- &  & -- &  & -- \\
\bottomrule
\end{tabular}
}
	\topfloatunpad
\end{table*}

\noindent
We implement the rules of \cref{sec:scoring_pose_belief} in
\texttt{smfeval}, an open-source tool that scores an estimator's
belief track against a gold-standard track, given either as
plain poses or with its own covariance. Beliefs are stored in
\ifanon an\else the SQUARE format, an\fi{} extension of the TUM
track format \cite{sturm2012tum} holding a Gaussian, a weighted
particle set, or a plain pose per scan, together with the frame
and perturbation conventions the covariance is expressed in.

Using \texttt{smfeval}, we investigate four filter-based
LiDAR-inertial odometry systems for accuracy and consistency in
one easy and one adversarial setting. Oxford Spires
\cite{tao2024oxford} is the easy case: feature-rich scenes with
distinct geometry to localize against. GEODE
\cite{chen2026heterogeneous} is the adversarial case: its tracks
are chosen to be degenerate, with symmetric geometry or
featureless scenes. A failure on Oxford Spires cannot be
attributed to the environment. On GEODE, accuracy is expected to
degrade, and the question is whether the reported covariance
reveals it. \looseness -1

\subsubsection*{Experimental setup}
\noindent
From Oxford Spires we evaluate 12 outdoor pedestrian tracks with
a LiDAR and an IMU (excluding tracks with known gold-standard
problems) against a gold-standard track obtained by registering
the LiDAR clouds to a terrestrial laser scanner survey map. From
GEODE we evaluate the 55 tracks with a usable gold standard,
spanning seven environments and three LiDAR platforms. We
benchmark four state-of-the-art filters: the LiDAR-inertial
Fast-LIO2 \cite{xu2022fast}, Faster-LIO \cite{bai2022faster},
and Point-LIO \cite{he2023point}, and the LiDAR-only I2EKF-LO
\cite{yu20242}. We use GLIM \cite{koide2024glim}, at its
released defaults, to check whether smoothing over a window of
scans, rather than
marginalizing each scan, removes the inconsistency. GLIM is a
factor-graph smoother and has no per-point $R$ to match, so its
$k$ is comparable as a consistency ratio but its inputs are not
configured like the filters'. KISS-ICP
\cite{vizzo2023kissicp}, LiDAR-only, serves as a covariance-free
accuracy baseline; our runner reads only \texttt{PointCloud2},
so it covers $35$ of the $55$ GEODE tracks, the $20$ absent ones
being the Livox-format gamma platform. Beliefs are recorded at
scan rate; all four filters use a right-perturbation error state
ordered
(translation, rotation). We use the dataset-provided filter
configurations where available, and the filters' released
defaults otherwise. For all filters the measurement noise is
fixed to $R=\SI{e-3}{m^2}$ per LiDAR point (Point-LIO
\SI{e-2}{m^2}). We score the absolute translation error.
Relative pose error cannot be scored because filters do not
report the cross-covariance between poses. We report scores as
if the gold standard were exact because neither dataset ships
its uncertainty. Using a Gaussian-process interpolation
\cite{zhang2019rethinking}, we give it a covariance for the
interpolation between gold-standard poses, though not for the
survey and registration error behind them, and rescore.
This drops NEES by at most \SI{12}{\percent} and changes no
findings.

\subsubsection*{Frames and timing}
\noindent
A scoring rule compares an estimate with a gold-standard
position in the same frame at the same time. The extrinsic
$E\in \mathrm{SE}(3)$ from the filter's frame to the
gold-standard frame multiplies on the right, so the $6{\times}6$
covariance transports by the adjoint $\mathrm{Ad}_{E^{-1}}$.
Writing $J$ for the translation rows of $\mathrm{Ad}_{E^{-1}}$,
\begin{equation}
	\Sigma'_{tt}=J\,\Sigma\,J^\top,\qquad
	J=U^\top\begin{bmatrix}I & -\ell^{\wedge}\end{bmatrix}.
\end{equation}
Here, $U \in \mathrm{SO}(3)$ and $\ell \in \R^3$ are the
rotation and lever arm of $E$. The second block of $J$ carries
orientation uncertainty into position through the lever arm;
rotating the marginal $\Sigma_{tt}$ alone would drop that term.
Each estimated track is aligned to the gold standard by a
whole-track $\mathrm{SE}(3)$ fit; that alignment of world frames
multiplies on the left and leaves $\Sigma$ untouched. From here
on, $\Sigma$ denotes this transported block. For time alignment,
each estimate is paired with the gold standard by
nearest-neighbor timestamp matching within \SI{5}{ms}, under a
centimeter of motion at walking pace. The Gaussian-process
rescoring above also removes this mismatch and does not change
the outcome.

\begin{figure*}[t]
	\topfloatpad
	\centering
	\subfloat[Entry points of the filter's four parameters.
	          Coloring shared with (b).\label{fig:credit_arch}]{%
		\creditarch}\\[-2pt]
	\subfloat[Scaling the shipped parameters. Only $R$ and the
	          voxel size move the credited information.\label{fig:credited_knob}]{%
		\includegraphics[width=0.315\textwidth]{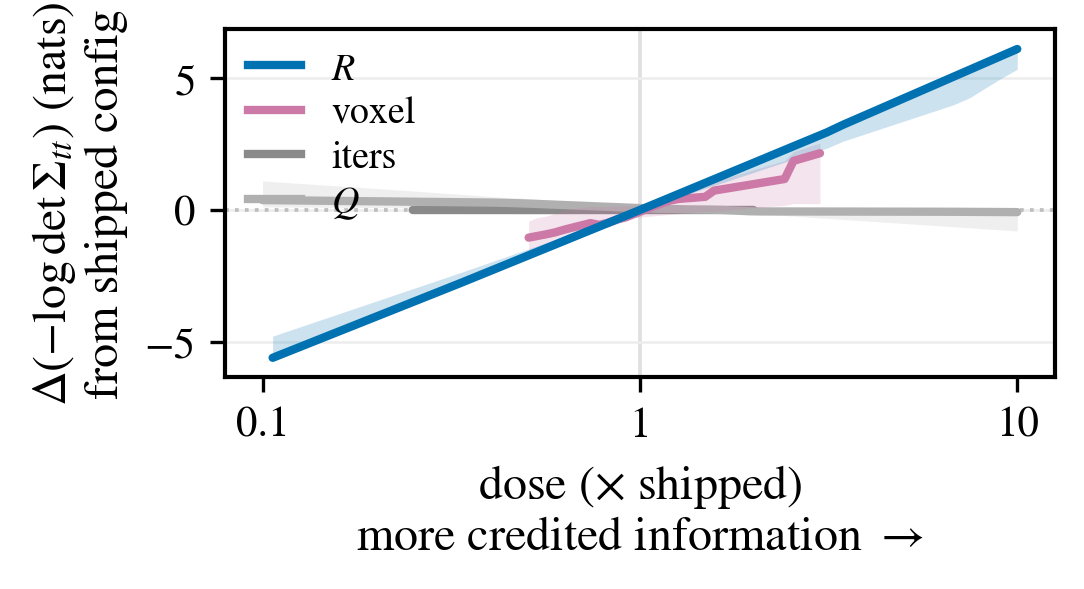}}
	\hfill
	\subfloat[Each nat of precision costs a factor $1.4$ in
	          median NEES.\label{fig:credited_info_nees}]{%
		\includegraphics[width=0.315\textwidth]{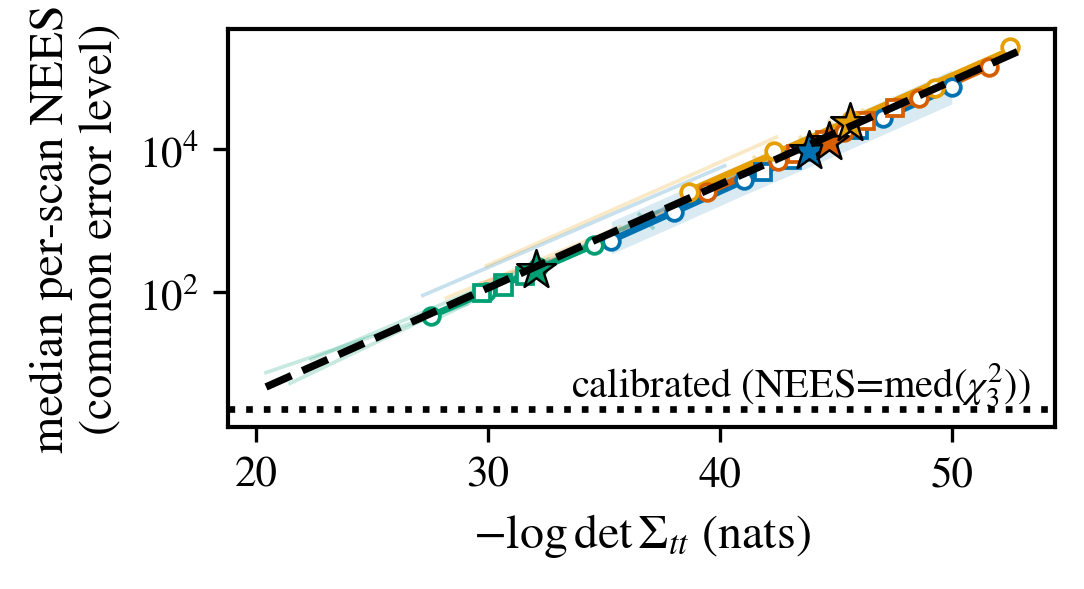}}
	\hfill
	\subfloat[The APE does not follow the dotted line, the error
	          that would justify the reported precision.\label{fig:credited_info_ape}]{%
		\includegraphics [width=0.315\textwidth]{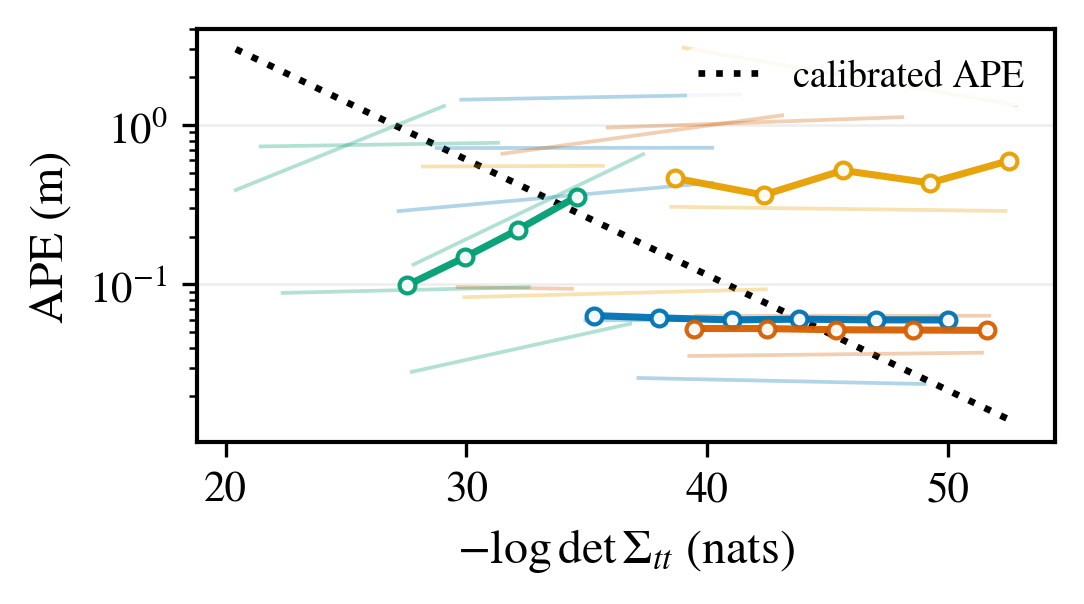}}\\
	\includegraphics [
		width = 0.82\textwidth,
		clip,
		trim = 0mm 3.3mm 0mm 0mm,
	] {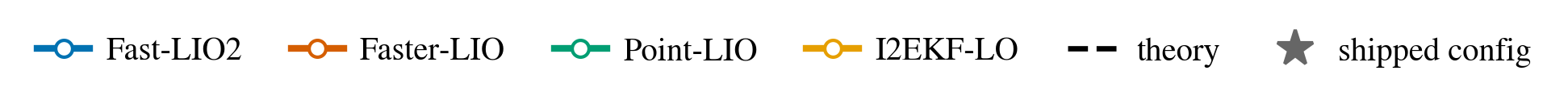}
	\caption{Configuration sweeps of the four filters.}
	         \topfloatunpad
\end{figure*}

\subsection{Every reported covariance is orders of magnitude too tight}
\label{sec:overconfident}
\noindent
\Cref{tab:consistency} reports medians over tracks of per-track
values, with $95\%$ bootstrap intervals from resampling tracks.
Per track, MAE, ES and LogS are means over scans of the absolute
position error, energy score and log score; $k$ is the inflation
of \cref{eq:k} computed per track, and coverage
$\mathrm{cov}_{0.9}$ is the fraction of scans whose gold-standard
position falls inside the reported $90\%$ region. The energy
score collapses to MAE as the covariance shrinks, and here it
tracks MAE to within two percent. That is itself a symptom. The
reported covariances are too small for a distance-sensitive
score to register them. Bold marks the best filter and every
filter it does not significantly beat by a paired Wilcoxon
signed-rank test over the shared tracks, Holm-corrected within
each metric at $\alpha=0.05$. GLIM and KISS-ICP are references
and are excluded from the test. Coverage is reported without
intervals or bold because it is zero on nearly every track, so
intervals collapse and the test does not separate the filters.
\looseness -1

\subsubsection*{Accuracy raises no flags}
\noindent
Point error is well-behaved across the 12 Spires tracks. The
per-filter median MAE (\cref{tab:consistency}) places the three
LiDAR-inertial filters within \SI{0.3}{\meter} of the gold
standard, an accuracy at which nothing would prompt a closer
look. GLIM is comparable at \SI{0.17}{\meter}. The
covariance-free KISS-ICP trails at \SI{0.71}{\meter}, which
bounds what LiDAR alone achieves here; the LiDAR-only I2EKF-LO
falls short of it at \SI{2.4}{\meter}. On GEODE the filters
diverge (MAE $\geq$ \SI{5}{\meter}) on 21 to 43 of the 55 tracks
depending on the filter, as expected from the degenerate
geometry; on these tracks the covariance should grow with the
error.

\subsubsection*{The filters are overconfident}
\noindent
\Cref{tab:consistency} shows that the concentration is
unwarranted for all four filters. Even the best log score runs
to thousands of nats where a consistent filter would score a
few. Splitting the Gaussian log score (\cref{eq:logscore}) into
its two terms, the NEES term contributes $10^{3}$--$10^{7}$ nats
while the sharpness term sits between $-45$ and $-32$ nats, so
the penalty is almost entirely for inconsistency. The inflation
$k$ is between $5.7{\times}10^{2}$ and $6.7{\times}10^{6}$ on
Spires, so the reported covariances would have to be scaled up
by hundreds to millions to match the errors. A calibrated filter
has $\mathrm{cov}_{0.9}=0.9$, an overconfident one less; here
$\mathrm{cov}_{0.9}$ is near zero for every filter, so the
reported $90\%$ ellipsoid almost never contains the truth.
\looseness -1

Divergence does not explain the overconfidence. Between $12$ and
$34$ of the $55$ GEODE tracks converge per filter, with MAE
$<\SI{5}{\meter}$; on those alone, median NEES stays at
$10^{3}$--$10^{5}$ and $\mathrm{cov}_{0.9}$ at zero for every
filter. \looseness -1

Scene degeneracy does not explain it either. We measure
degeneracy by the condition number
$\kappa=\lambda_{\max}/\lambda_{\min}$ of the scan's
point-to-plane information $\sum_i n_i n_i^\top$, with $n_i$ the
surface normal at point $i$ of the downsampled scan, estimated
by principal component analysis. Where the normals concentrate,
$\kappa$ grows, since a translation orthogonal to the normals
leaves every point-to-plane residual unchanged and such poses
cannot be distinguished. Regressing per-scan
$\log\lVert e\rVert^2_\Sigma$ on $\log\kappa$ gives
$R^2\le0.014$ over one order of magnitude of $\kappa$ on Spires
and $6.6$ orders on GEODE, so overconfidence is as severe on
well-conditioned scans as on degenerate ones. The same holds for
the trace, log-determinant and smallest eigenvalue of the
information matrix of the two filters with accessible Jacobians
($R^2\le0.027$). On the same 12 Spires tracks the smoother GLIM,
which keeps a window of scans instead of marginalizing each, is
far less overconfident at a competitive MAE, $k=29.7$, yet still
inconsistent. The severity in the filters therefore points to
what all four share, an iterated error-state Kalman filter
\cite{he2021kalman} with a point-to-plane model against a map
treated as exact.

\subsubsection*{The overconfidence survives without a gold standard}
\noindent
We score the filters against each other. We bring one estimate
into the other's frame by a rigid Umeyama fit on their matched
translations, so the gold standard never enters. Taking the
gauge from the gold-standard alignment instead moves
$k_{\mathrm{pair}}$ by under $5\%$ on Spires and up to $2\times$
on GEODE, small against $k_{\mathrm{pair}}$ itself, so the
result does not depend on the gauge. By \cref{prop:pairwise},
$k_{\mathrm{pair}}$ is a lower bound on the larger of the two
inflations. We observe $k_{\mathrm{pair}}\approx180$--$330$ on
Spires and $15$ to $5.8{\times}10^{3}$ on GEODE, so at least one
filter in every pair is overconfident by two orders of magnitude
on Spires, without any reference to a gold standard.
\Cref{tab:pairwise} reports the median over tracks of the
per-track median with a $95\%$ bootstrap interval over tracks,
GEODE restricted to the tracks on which both filters of the
pair converge. I2EKF-LO is omitted
because it runs two iterated EKFs in place of an IMU and reports
a covariance from each, so there is no single $\Sigma$ to sum
with a partner's. The bound holds under positively correlated
errors; using the gold standard to check this hypothesis, the
symmetric part of the estimated cross-covariance is positive
semidefinite on $9$ of the $10$ Fast-LIO2--Faster-LIO tracks
(two of the twelve are excluded, Faster-LIO diverging) and on
about half the tracks of the pairs involving Point-LIO. Where it
fails, the worst direction inflates the numerator by at most
$1.7\times$, which still leaves $k_{\mathrm{pair}}$ above $100$.
The result survives sweeping the clock offset up to \SI{50}{ms}
and applying the full extrinsic, lever arm included. The $k$
measured against the gold standard sits one to two orders of
magnitude above the lower bound on Spires; the gap is either
error common to the filters or error in the gold standard
itself, and separating the two needs the gold standard's own
covariance, which neither dataset ships.

\begin{table}[bp]
	\centering\small
	\caption{Pairwise inflation per filter pair.}%
	\label{tab:pairwise}
	\vspace{-1mm plus 1pt minus 2pt}
	\setlength \tabcolsep {10pt}
	\begin{tabular}{l r@{\;}r}
		\toprule
		Pair                   & \multicolumn{2}{c}{$k_{\mathrm{pair}}$\,(${\to}1$)}                              \\
		\midrule
		\multicolumn{3}{l}{\textit{Oxford Spires}}                                                                \\
		Fast-LIO2 / Faster-LIO & $3.2{\times}10^{2}$                                 & $[1.4, 4.4]{\times}10^{2}$ \\
		Fast-LIO2 / Point-LIO  & $3.3{\times}10^{2}$                                 & $[7.3, 82]{\times}10^{1}$  \\
		Faster-LIO / Point-LIO & $1.8{\times}10^{2}$                                 & $[5.2, 52]{\times}10^{1}$  \\
		\midrule
		\multicolumn{3}{l}{\textit{GEODE converged}}                                                              \\
		Fast-LIO2 / Faster-LIO & $5.8{\times}10^{3}$                                 & $[4.1, 660]{\times}10^{2}$ \\
		Fast-LIO2 / Point-LIO  & $4{\times}10^{2}$                                   & $[1.9, 348]{\times}10^{1}$ \\
		Faster-LIO / Point-LIO & $1.5{\times}10^{1}$                                 & $[0.81, 5.2]{\times}10^{1}$\\
		\bottomrule
	\end{tabular}
\end{table}

\subsection{The covariance ignores the residuals}
\label{sec:covariance_residuals}
\noindent
The covariances of \cref{tab:consistency} are too tight by
orders of magnitude, and the reason is structural. In the update
step, the information added by a scan of $N$ matched points is
\[
	\mathcal{J}=H_t^\top R^{-1}H_t=\sigma^{-2}\sum_{i=1}^{N} n_i n_i^\top,
\]
with $n_i$ the normals of the matched planes, at most one per
  voxel, and $\sigma$ the configured measurement noise
  (\cref{fig:credited_knob}). We call $\mathcal{J}$ the credit,
  since it is the information the filter credits the scan with.
  The residuals, how far the scan lies off those planes, move the
  mean but do not enter
  $\mathcal{J}$. Since the $n_i$ are unit vectors,
  $\tr\mathcal{J}=N/\sigma^2$, so the total information is fixed
  by the match count and the noise constant before the scan is
  compared to the map. Fast-LIO2 assigns
  $\sigma^2=\SI{e-3}{m^2}$ to each of
  $N\approx5.7{\times}10^{3}$ matched points, which if spread
  evenly gives \SI{0.7}{mm} of standard deviation per axis,
  while the same track is accurate to \SI{6}{cm}
  (\cref{fig:overconfidence}). The two numbers come from
  different sources, and nothing ties them together. This is why
  the overconfidence in \cref{sec:overconfident} does not
  respond to divergence or scene degeneracy. The covariance
  reflects the configuration, and the configuration does not
  change with the scene. \looseness -1

\subsubsection*{Sharpness is paid for in consistency}
\noindent
\Cref{fig:credited_info_nees,fig:credited_info_ape} sweep the
measurement noise $R$ and the voxel size of each filter, which
together set the credited information, across $30$ nats of
precision on $23$ tracks from Spires and GEODE.
\Cref{prop:scaling} gives the sweep two predictions to choose
between. If the sharpening is \emph{earned}, the error shrinks
with the covariance and the NEES holds, so the error falls
$0.072$ decades per nat of precision and the NEES is flat. If
it is \emph{unearned}, the error stays where it was and the
median NEES rises $0.145$ decades per nat. The four filters
follow the $0.145$ line across five orders of magnitude in NEES,
with $22$ of the $23$ filter--scene tracks covering $0.145$ in
their $95\%$ confidence interval. A $0.145$ slope with the
covariance's shape preserved means the error itself did not
move, and \cref{fig:credited_info_ape} shows this directly.
Point-LIO's steeper $0.207$ on keble-college-02 is the
exception, its error growing with the credit because it updates
per point with a gain scaling as $1/R$; dividing out the
measured error returns $0.141$, so its covariance follows the
configuration like the others. The sharpening is unearned. The
credit computed from the normals and $\sigma^2$ alone matches
the precision read off the filter's covariances to within
$10^{-2}$ nats, so the configuration accounts for all of the
reported precision. \looseness -1

The distance from a filter's median NEES down to the consistent
value $2.37$ is $3\ln k$ nats for the inflation $k$ of
\cref{tab:consistency}; on Spires $k$ runs $5.7{\times}10^{2}$
to $6.7{\times}10^{6}$, so the filters sit $19$ to $47$ nats of
precision above consistency. At a median NEES of $10^{4}$ each
further nat costs some $1.7{\times}10^{3}$ nats of log score,
which is why the scores of \cref{tab:consistency} run to
thousands. A filter that let its residuals inform its
covariance, through a noise estimated from the innovations, a
robust kernel, or an observability test, would take less of the
credit as more is offered, and its NEES slope would fall below
$0.145$. None of the four does. \looseness -1

\subsubsection*{The dominant errors displace the whole scan together}
\noindent
The overcounting has a simple source, and the obvious fix does
not remove it. Setting $R=\sigma^2I$, as all four filters do,
declares the $N$ points of a scan to be $N$ independent
measurements, so the credit grows as $N/\sigma^2$. The errors
that dominate a scan are not independent. Extrinsic calibration,
the LiDAR--IMU time offset, motion distortion and drift in the
map displace every point of the scan together, and each is one
unknown rather than $N$. Sampling more points off the same
surfaces cannot resolve a displacement they all share, so the
information a scan carries saturates while the credit keeps
growing. This holds for any likelihood that factorizes over the
points, whatever residual it uses.

The obvious fix is to marginalize the shared displacement as a
per-scan $\delta\sim\mathcal{N}(0,\Sigma_\delta)$, which gives
$R=\sigma^2I+H_t\Sigma_\delta H_t^\top$ and caps the credited
information at
$(\Sigma_\delta+\sigma^2(H_t^\top H_t)^{-1})^{-1}\preceq\Sigma_\delta^{-1}$.
We tested it post hoc on the Fast-LIO2 $R$ sweep of
\cref{fig:credited_info_nees}, where it fails. Applied at a
single update it recovers $3.3$ of the track's $23.7$ nats, a
step in the right direction but a small one. Applied at every
update it does reach nominal, but only at
$\Sigma_\delta^{1/2}=\SI{1.1}{m}$, $277\times$ the \SI{4}{mm}
the innovation actually shows. A displacement that size is not
one the scans have.

Marginalizing a shared displacement is therefore the right
correction applied in the wrong place. What the points share is
not drawn afresh each scan; it accumulates in the map, and the
innovation, which measures a scan against that map, cannot see
it. The uncertainty the filters are missing is the map's, and
carrying it belongs to a different paper. \looseness -1

The likelihood offers two dials for handing back the $23.7$
nats of keble-college-02, which runs on
$N\approx5.7{\times}10^3$ points. Inflating $\sigma$ closes the
gap only at \SI{2}{m} of assumed noise per point, two orders of
magnitude above the sensor's ranging error. Discounting the
count closes it at one or two independent points per scan. Read
this way, $N/k$ is the scan's effective sample size, and the
filters are running at $k$ times it.


\section{Limitations}
\noindent
Five limitations qualify the case study. Scoring first aligns
the estimated track to the gold standard by a rigid transform,
and that alignment is fitted to the same gold standard the
scores then use. Fitting it over the whole track is the
generous choice, and aligning on the first pose alone raises
every median $k$ further. The condition number $\kappa$ is only
a proxy for degeneracy, though for the two filters with
accessible Jacobians every tested scalar summary of their
information matrices explains as little. All four filters build
on the iterated error-state Kalman filter, so the replication
runs across implementations, sensors and datasets rather than
across estimator architectures, with the smoother GLIM the one
outside point. Consecutive errors are strongly dependent, with
a lag-one autocorrelation near $0.99$, so uncertainty attaches
only at the track level. Finally, the inflation $k$ is a median
and $\mathrm{cov}_{0.9}$ is saturated at zero, so the two
together characterize bulk inflation and the presence of
unflagged large errors but not the magnitude of errors in the
tail.

\section{Concluding remarks}
\noindent
Scoring the beliefs of four LiDAR-inertial filters over $67$
tracks finds their reported covariances 2 to 7 orders of
magnitude too tight, and set by the configuration rather than
by the scene. Both findings are reproducible without a gold
standard: the tightness by scoring two filters against each
other, and the dependence on configuration by predicting the
reported precision from the noise constant and the matched
normals alone. Only the claim that the sharpening is unearned
needs the reference. Point error alone would have shown none of
this. Odometry that reports a belief should be scored as one,
and to that end we release \texttt{smfeval},
an open-source implementation of the scores, tests, and
intervals of this paper. \looseness -1

One choice in the methodology deserves more attention than it
usually gets. An odometry filter leaves some degrees of freedom
of its track unpinned, six for a LiDAR-only filter and four
once an IMU fixes gravity, and scoring fits these to the gold
standard before any score is computed. Every score is therefore
conditional on that fit, and the fit uses the reference it is
then judged against. The choice is not neutral. A whole-track
SE(3) fit is the most generous, a first-pose fit the least, and
an inertial filter fitted on six degrees of freedom has its
pitch and roll corrected for free. In this study the gauge
moves $k_{\mathrm{pair}}$ by under $5\%$ on Spires and up to
$2\times$ on GEODE, which is negligible against inflations of
$10^{2}$ to $10^{6}$ but would decide the outcome for a
well-calibrated filter. Two remedies follow. The estimator
should declare its gauge rather than leave the evaluator to
guess it, which \texttt{smfeval}'s format requires; and where a
transform must be fitted, it should enter the belief as an
uncertain quantity rather than a fixed correction, which needs
a gold standard with its own covariance. Neither dataset
supports the second yet.

We score translation only. Proper scores on SO(3) face
intractable normalizers in the natural belief families, and
supplying them is the next step.

\section*{Acknowledgments}

\noindent
\ifanon\else
	This work was supported by the Danish Data Science Academy,
	which is funded by the Novo Nordisk Foundation (NNF21SA0069429).
\fi
We used a generative language model for grammatical
refinement, literature review assistance, and code debugging.

\balance
\bibliography{references}

\end{document}